\renewcommand{\leq}{\mathrel{\leqslant}}
\renewcommand{\geq}{\mathrel{\geqslant}}
\newcommand{\keywords}[1]{\par\addvspace\baselineskip
\noindent\keywordname\enspace\ignorespaces#1}
\begin{document}


\title{Efficient Encodings of Conditional Cardinality Constraints}


%
%
%


\author{  $^1$Abdelhamid Boudane \and $^1$Said Jabbour  \and $^2$Badran Raddaoui \and $^1$Lakhdar Sais}
%

\institute{$^1$CRIL-CNRS UMR 8188, Universit\'{e} d'Artois, F-62307 Lens Cedex, France\\ 
$^2$ SAMOVAR, T\'el\'ecom SudParis, CNRS, Univ. Paris-Saclay, Evry, France 
\mailsa}

%
%

\maketitle
\vspace*{-0.5cm}
\begin{abstract}

In the encoding of many real-world problems to propositional satisfiability, the cardinality constraint is a recurrent constraint that needs to be managed effectively.
Several efficient encodings have been proposed while missing that such a constraint can be involved in a more general propositional formulation.
To avoid combinatorial explosion, Tseitin principle usually used to translate such general propositional formula to Conjunctive Normal Form (CNF), introduces fresh propositional variables to represent sub-formulas and/or complex contraints.
Thanks to Plaisted and Greenbaum improvement, the polarity of the sub-formula $\Phi$ is taken into account leading to conditional constraints of the form $y\rightarrow \Phi$, or  $\Phi\rightarrow y$, where $y$ is a fresh propositional variable.
In the case where $\Phi$ represents a cardinality constraint, such translation leads to conditional cardinality constraints subject of the present paper.
We first show that when all the clauses encoding the cardinality constraint are augmented with an additional new variable, most of the well-known encodings cease to maintain the generalized arc consistency property. Then, we consider some of these encodings and show how they can be extended to recover such important property.
An experimental validation is conducted on a SAT-based pattern mining application,  where such conditional cardinality constraints is a cornerstone, showing the relevance of our proposed approach.

\keywords{Propositional Satisfiability, Conditional Cardinality}
\end{abstract}

\section{Introduction}
\label{sect:introduction}

Continuous improvements in SAT solver technology have resulted in a real scaling up and widening of the class of real-world problems that can be solved in practice. 
The modeling phase of such increasing number and  more complex applications into propositional formulas in Conjunctive Normal Form (CNF) suitable for use by a satisfiability solver becomes even more crucial.
The modeling issue follows several polynomial transformations and rewriting steps, starting from high level description, using high order language or full propositional logic, to low level formulation, usually a formula in CNF.
The whole process preserves propositional satisfiability, thanks to the extension principle  \cite{Tseitin68}, allowing the introduction of new variables to represent sub-formulas or complex constraints.
Among such constraints, \textit{cardinality} and \textit{pseudo-boolean constraints}, expressing numerical bounds on discrete quantities, are the most popular as they arise frequently
in the encoding of many real-world problems including scheduling, logic synthesis or verification, product configuration and data mining. For the above reasons, there have been various approaches  addressing  the issue of finding an efficient encoding of cardinality  (e.g. \cite{Warners96,BailleuxB03,Sinz05,SilvaL07,AsinNOR09,JabbourSS14}) and pseudo-boolean constraints (e.g. \cite{EenS06,BailleuxBR09}) as a CNF formula. Efficiency refers to both the compactness of the representation (size of the CNF formula)  and to the ability to achieve the same level of constraint propagation (generalized arc consistency) on the CNF formula. However, most of the proposed encodings does not take care of its interactions with the remaining part of the propositional formula, through different logical connectives. To avoid combinatorial explosion, the Tseitin principle \cite{Tseitin68} is usually used to translate general propositional formula to CNF, making use of fresh propositional variables to represent sub-formulas and/or complex contraints.
Thanks to Plaisted and Greenbaum  \cite{Plaisted:1986} improvement, the polarity of the sub-formula $\Phi$ is taken into account leading to conditional constraints of the form $y\rightarrow \Phi$ or  $\Phi\rightarrow y$, where $y$ is a fresh propositional variable. When a cardinality constraint is involved as a sub-formula, such translation leads to what we call a \textit{conditional} cardinality constraint.

The translation of {\it single} cardinality or pseudo-boolean constraints to SAT is a well studied problem.
We are aware of only one contribution that consider the interactions of such constraints with the remaining part of the formula involving it.
Indeed, in \cite{Ignasi2015}, the authors described how the encoding of linear constraints can be improved by taking implication chains appearing in the formula into account.
The resulting encodings are smaller and can propagate more strongly than separate encodings. 
 
In this paper, we  introduce a novel variant of cardinality constraints, called conditional cardinality constraints,  defined as $y \rightarrow \sum_{i=1}^{n} x_i \le k$. It expresses that no more than $k$ variables can be set to $true$, when setting the condition $y$ to $true$.
We first show that by adding $\neg y $ disjunctively to all the clauses resulting from the encoding of the cardinality constraint, most of the well-known encodings cease to maintain constraint propagation.  We then address the issue of extending such encodings while maintaining generalized arc consistency. We also consider the particular case of conditional AtMostOne constraints, i.e., $k=1$. Experimental evaluation is conducted on a SAT based non redundant association rules mining problem, showing the relevance of our proposed framework. 


\section{Technical Background and Preliminary Definitions}\label{sec:tb}

\subsection{Preliminary Definitions and Notations}
Let ${\mathcal L}$ be a propositional language of formulas built in the standard
way, using usual connectives ($\vee$, $\wedge$, $\neg$, $\rightarrow$, $\leftrightarrow$) and a set of propositional variables. 
A propositional formula ${\cal F} $ in CNF is a conjunction of {\it clauses}, where a clause is 
a disjunction of {\it literals}. 
A literal is a positive ($x$) or negated ($\neg{x}$) 
propositional variable.  A clause can be represented as a set of literals and a formula as a set of clauses. 
The two literals $x$ and $\neg x$ are called {\it complementary}. 
We note $\tilde{l}$  the complementary literal of $l$. 
For a set of literals $L$, $\bar{L}$ is defined as $\{\tilde{l} ~|~ l \in L\}$. For a clause $c$, we note $\bar{c}=\bigwedge_{l\in c} \tilde{l}$. 
A {\it unit} clause is a clause containing only one literal (called {\it unit literal}), while a  binary clause contains exactly two literals.  A Horn (resp. reverse Horn) clause is a clause with at-most one positive (resp. negative) literal. A positive (resp. negative) clause is a clause whose literals are all positive (resp. negative). 
An \emph{empty clause}, denoted $\perp$, is interpreted as  false (unsatisfiable), whereas an \emph{empty CNF formula}, denoted $\top$, is interpreted as true (satisfiable).

Let us recall that any general propositional formula can be translated to CNF using linear Tseitin's encoding \cite{Tseitin68}.
This can be done by introducing fresh variables to represent sub-formulas in order  to 
represent their truth values. For example, given a propositional formula containing the variables $x$ and $y$, and $\alpha$  is a fresh  variable, one can add the definition $\alpha\leftrightarrow x\vee y$ (called \textit{extension}) to the formula while preserving satisfiability.
Two decades later, after Tseitin's seminal paper, Plaisted and Greenbaum presented an improved CNF translation  that essentially produces a subset of Tseitin's representation \cite{Plaisted:1986}.
The authors noticed that by keeping track of polarities of sub-formulas, one can remove large parts of Tseitin translation. For example, when the disjunction $x\vee y$ is a sub-formula with positive polarity, it is sufficient to add the formula $\alpha\rightarrow x\vee y$, i.e., a clause $(\neg \alpha\vee x\vee y)$. 

The set of variables occurring in ${\cal F}$ is denoted $V_ {\cal F}$ and its associated set of literals 
\textcolor{black}{$L_{\cal F}=\cup_{x\in{\cal V}_{\cal F}} \{x,\neg x\}$}.
A set of literals is \emph{complete} if it contains one literal for each variable in $V_{\cal F}$,
and \emph{fundamental} if it does not contain complementary literals. A literal $l$ is called {\it monotone or pure} if $\tilde{l}$ does not appear in ${\cal F}$.  
An {\it interpretation} $\rho$ of a formula ${\cal F}$ is a function which  associates a truth value $\rho(x)\in\{0, 1\}$ ($0$ for false and $1$ for $true$)
to some of the variables $x \in V_ {\cal F}$. 
$\rho$ is \emph{complete} if it assigns a value to every $x \in V_ {\cal F}$, and \emph{partial} otherwise.
An interpretation is alternatively represented by a complete and fundamental set of literals.
A {\it model} of a formula ${\cal F}$ is an  interpretation $\rho$ that  satisfies the  formula, denoted $\rho\models\cal F$. A formula ${\cal G}$ is a logical consequence of a formula ${\cal F}$, denoted ${\cal F}\models{\cal G}$, iff every model of ${\cal F}$ is a model of ${\cal G}$. The {\it SAT problem} consists in deciding if a given CNF formula admits a model or not.

 ${\cal F}|_x$ denotes the formula obtained from ${\cal F}$ by assigning $x$ the truth-value $true$. Formally, ${\cal F}|_x = \{c ~|~ c\in {\cal F}, \textcolor{black}{\{x, \neg{x}\} \cap c} = \emptyset\} \cup
\{\textcolor{black}{c\backslash \{\neg{x}\}} ~|~ c\in {\cal F},  \neg{x}\in c\}$. 
This notation is extended to interpretations: given an interpretation $\rho=\{x_1,\dots, x_n\}$, 
we define ${\cal F}|_\rho= (\dots (({\cal F}|_{x_1})|_{x_2})\dots |_{x_n})$. ${\cal F}^*$ denotes the formula ${\cal F}$ closed under unit propagation, defined recursively as follows:
(1) ${\cal F}^* = {\cal F}$ if ${\cal F}$ does not contain any unit clause, 
(2)  ${\cal F}^* = \perp$ if ${\cal F}$ contains two \textcolor{black}{unit-clauses $\{x\}$ and  $\{\neg{x}\}$},
(3) otherwise, ${\cal F}^*= ({\cal F}|_{x})^*$ where $x$ is the
literal appearing in a unit clause of ${\cal F}$.  A clause $c$ is deduced by unit propagation from ${\cal F}$, noted ${\cal F}\models^* c$, iff $({\cal F}\wedge\bar{c})^* =\bot$.


 
%
%

\subsection{CNF Encodings of Cardinality Constraints: An Overview}
\label{sec:overview}
\subsubsection{Pigeon-Hole based Encoding:}
\label{sec:phc}
In \cite{JabbourSS14}, the authors proposed a new encoding of the cardinality constraints $\sum_{i=1}^{n} x_i \ge k$, based on the Pigeon-Hole principle. 
They observed that the semantic of the cardinality constraint can be equivalently expressed as the problem of putting $k$ pigeons into $n$ holes. 
The first formulation, called ${\cal P}_n^{k}$, given in \cite{JabbourSS14}, is simply expressed by the following set of constraints: 
\begin{equation}
\label{eqc1}
\bigwedge_{j=1}^{k} (\neg p_{ji} \vee x_i), ~~~1\leq i\leq n 
\end{equation}
\begin{equation}
\label{eqc2}
  \bigvee_{i=1}^n p_{ji}, ~~~1\leq j\leq k, \hspace{1cm} \bigwedge_{1\leq j < j'\leq b} ( \neg p_{ji}\vee\neg p_{j'i}), ~~~ 1\leq i\leq n
\end{equation}
The equations (\ref{eqc2})  encode the well-known pigeon hole problem $PHP_n^{k}$, where $k$ is the number of pigeons and $n$ is the number of holes ($p_{ji}$ expresses that  pigeon $j$ is in hole $i$).  
Unfortunately, checking the satisfiability of a Pigeon-Hole formula is computationally hard.
To maintain generalized arc consistency (GAC), the authors proposed an improvement obtained by breaking the symmetries between the variables $p_{ij}$ involved in the pigeon hole expression (equations (\ref{eqc2}) and (\ref{eqc3})).
By resolution between the clauses of  symmetry breaking predicates and those of ${\cal P}_n^{k}$, the authors derived the following  encoding, called $ph{\cal P}_n^{k}$: 
%
%
%
\begin{equation}
\label{form:pigeonMP}
 	\bigwedge_{1\le i \le k} (\bigvee_{1 \le j \le n-k+1} p_{ij})
 \end{equation}
\begin{equation}
\label{form:pigeonXP}
 	\bigwedge_{1 \le i \le k} \bigwedge_{1 \le  j \le n-k+1} (x_{i+j-1} \vee \neg p_{ij})
 \end{equation}
\begin{equation}
\label{form:pigeonPP}
 	\bigwedge_{1 \le i < k}  \bigwedge_{1 \le j < n-k+1}  (\neg p_{(i+1)j} \vee \bigvee_{1 \le l \le j} p_{il})
 \end{equation}
\begin{example}
\label{ex:ph}
Let us consider the inequality $x_1 + x_2  + x_3 + x_4 + x_5 + x_6 \ge 4$. Using the pigeon-hole based encoding $ph{\cal P}_6^{4}$, we obtain the following CNF:\\
%
%
%
\begin{minipage}[b]{0.33333\textwidth}
\raggedright

\[
   \begin{array}{ll}
   	p_{11}  \vee \neg p_{21}   \\
	   p_{11}  \vee p_{12} \vee \neg p_{22} \\
   	p_{21}  \vee \neg p_{31}   \\
	   p_{21}  \vee p_{22} \vee \neg p_{32} \\
   	p_{31}  \vee \neg p_{41}   \\
	   p_{31}  \vee p_{32} \vee \neg p_{42} \\
  \end{array}
\]
\end{minipage}%
\begin{minipage}[b]{0.33333\textwidth}
\centering
\[
   \begin{array}{l}
x_1 \vee \neg p_{11} ~~~~~~~~ x_2\vee  \neg p_{12} \\
x_2 \vee \neg p_{21}  ~~~~~~~~ x_3 \vee \neg p_{13} \\
x_3 \vee \neg p_{22}  ~~~~~~~~ x_3 \vee \neg p_{31}  \\
x_4 \vee \neg p_{23}  ~~~~~~~~ x_4 \vee \neg p_{32}  \\
x_4 \vee \neg p_{41}  ~~~~~~~~ x_5 \vee \neg p_{33}  \\
x_5 \vee \neg p_{42}  ~~~~~~~~ x_6 \vee \neg p_{43} \\
  \end{array}
\]
\end{minipage}%
\begin{minipage}[b]{0.33333\textwidth}
\raggedleft
\[
   \begin{array}{l}
    p_{11}   \vee p_{12} \vee p_{13}   \\
    p_{21}   \vee p_{22}   \vee p_{23}  \\
    p_{31}  \vee p_{32}   \vee p_{33}   \\
    p_{41}  \vee p_{42}  \vee p_{43}   \\
  \end{array}
\]
\end{minipage}
\end{example}

 \subsubsection{Sorting Networks based Encoding:}
 \label{sec:SNE}
 One of the most effective encodings for cardinality constraints is based on sorting networks \cite{EenS06}. In this encoding, the cardinality constraint $\sum_{i=1}^n x_i \leq k$ is translated into a single sorter with 
 $n$ inputs $X=\{ x_1, \dots,  x_n\}$ and $n$ outputs $Z=\{z_1,  \ldots , z_n\}$ (sorted in descending order) where the k$^{th}$ output is forced to $true$. 
 The idea behind this encoding is to sort the input variables into true variables followed by false variables. To satisfy the constraint $\sum_{i=1}^n x_i \leq k$, it is sufficient to set $z_{k+1}$ to false. 
In \cite{EenS06}, the authors proved that the sorting networks based encoding maintains generalized arc consistency. Let us note $\Phi_{SN}^{n, k}(X;Z)$ the formula representing the sorting networks based circuit that takes as input the set of propositional variables $X$ and outputs an unary number represented by the set of propositional variables $Z$. The following formula defines the encoding: 
  \begin{equation}
  \label{sne}
  \Phi_{SN}^{n, k}(X;Z) \wedge \neg z_{k+1} 
\end{equation} 
As the outputs $Z$ are sorted in descending order, by fixing $z_{k+1}$ to $false$, all the remaining variables $z_{k+2},\dots, z_{n}$ must be propagated to $false$. Consequently, as the output variables are sorted in descending order, at most $k$ variables $z_1, \dots, z_k$ might be assigned to $true$. 
Let us note that the formula $\Phi_{SN}^{n, k}(X;Z)$ encoding the sorting network is a horn formula, derived using a basic comparator between two propositional variables \cite{EenS06}. Given two propositional variables  $x_1$ and $x_2$ from $X$, the comparator outputs two  variables $z_1$ and $z_2$ from $Z$, the two comparator, noted $2$-$comp(x_1,x_2; z_1,z_2)$, is defined by the following horn formula:
\begin{equation}  
x_1 \rightarrow z_1 \; ; \; x_2 \rightarrow z_1 \; ; \; x_1\wedge x_2 \rightarrow z_2  
\end{equation}
This formula allows to sort the two variables $x_1$ and $x_2$ resulting in two other variables $z_1$ and $z_2$ in descending order.
For example, when $x_1$ (resp. $x_2$) is assigned to $false$ (resp. $true$), the output variable $z_1$ (resp. $z_2$) is assigned to $true$ (resp. $false$).  
For more details, we refer the reader to \cite{EenS06} and \cite{AsinNOR11}.  

\subsubsection{Sequential Unary Counter based Encoding:}
 The sequential counter based encoding of a cardinality constraint proposed by Carsten Sinz in \cite{Sinz05} is another well-known encoding that preserves the generalized arc consistency property.  
It computes for each propositional variable $x_i$, \textcolor{black}{the partial sums} $s_i=\sum_{j=1}^{i} x_j$ for increasing values of $i$ up to the final $i = n$. The values of all the sums are represented as unary numbers of size equals to $k$. The encoding is defined as follows: 
\begin{equation}
\label{eq1}
(\neg x_1 \vee s_{1,1})
\end{equation} 
\begin{equation}
\label{eq2}
\bigwedge_{1 < j \leq k} \neg s_{1,j}
\end{equation}  
\begin{equation}
\label{eq3}
\bigwedge_{1 < i < n} (\neg x_i \vee s_{i,1})\wedge (\neg s_{i-1,1} \vee s_{i,1})
\end{equation} 
\begin{equation}
\label{eq4}
\bigwedge_{1 < i < n} \bigwedge_{1<j\leq k} (\neg x_i \vee \neg s_{i-1,j-1} \vee s_{i,j} )\wedge(\neg s_{i-1,j} \vee s_{i,j})
\end{equation}
\begin{equation}
\label{eq5}
\bigwedge_{1 < i \leq n} (\neg x_i \vee \neg s_{i-1,k})
\end{equation} 

The variables \textcolor{black}{$s_{i,j}$} denotes the $j^{th}$ digit of the  $i^{th}$ partial sum $s_i$ in unary representation. The constraints (\ref{eq1}) and (\ref{eq2}) correspond to the case $i=1$. 
The formula (\ref{eq5}) is very important.
It allows to detect the inconsistency and preserves the GAC property at the same time.
The other constraints allow the propagation of any changes of a partial sum \textcolor{black}{$s_i$} after any assignment of $x_i$ variables.
Let us note that the formula derived by the sequential unary counter based encoding is also a horn formula. 
%
\section{Conditional Cardinality Constraints Encodings}

In this section, we show how the cardinality constraint encodings of Section \ref{sec:overview}, can be effectively extended to encode conditional cardinality constraints of the form $y\rightarrow \sum_{i=1}^{n} x_i \leq k$ while preserving generalized arc consistency maintained by unit propagation. More precisely, for such conditional cardinality constraint, maintaining GAC, means that when $y$ is assigned the truth value $true$, the encoding must maintain GAC on  the cardinality constraint $\sum_{i=1}^{n} x_i \leq k$. On the other hand, when the cardinality constraint is $false$ under the current assignment, the variable $\neg y$ must be deduced by unit propagation. 

An important observation that can be made from the SAT based encodings of the cardinality constraint presented in the previous subsection, is that the obtained formula is horn.
Let us first introduce an important property, allowing us to grasp the intuition behind the encodings we propose in this paper. 

Let $\Phi$ be a horn formula, the sub-formula $\Phi^-$ denotes the set of negative clauses of $\Phi$ and $\Phi^+$ the set of clauses of $\Phi$ containing \textcolor{black}{exactly} one positive literal.

 \begin{proposition}
 \label{prop:hornP}
	Let $\Phi$ be a Horn formula and $\rho = \{x_1, x_2,\dots, x_k\}\subseteq V_{\Phi}$ an interpretation. $\Phi|_{\rho} \models^{*} \bot$ iff  $\exists c\in\Phi^-$ such that $\Phi^{+}|_{\rho} \models^{*} \bar{c}$.
 \end{proposition}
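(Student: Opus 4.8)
The plan is to prove the two implications separately, leaning on classical facts about Horn formulas. First observe the basic bookkeeping: every Horn clause has either exactly one positive literal (so it lies in $\Phi^+$) or none (so it lies in $\Phi^-$), hence $\Phi = \Phi^+ \cup \Phi^-$; the restriction $\Phi|_\rho$ is again Horn; and, since $\rho$ consists only of positive literals, the definite clauses of $\Phi|_\rho$ are exactly those of $\Phi^+|_\rho$ (restricting a definite clause either satisfies it or leaves it definite, and a negative clause can never become definite). We read $\Phi^+|_\rho \models^* \bar c$ as: for every atom $a$ with $\neg a \in c$, either $a \in \rho$ or $\Phi^+|_\rho \models^* a$.

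For $(\Leftarrow)$: let $c = \{\neg a_1,\dots,\neg a_m\}\in\Phi^-$ with $\Phi^+|_\rho \models^* \bar c$. Since $\Phi^+|_\rho \subseteq \Phi|_\rho$ and unit propagation is monotone (adding clauses never kills a unit consequence), $\Phi|_\rho \models^* a_i$ for each $a_i\notin\rho$, while the $a_i\in\rho$ are assigned $true$ in $\Phi|_\rho$ by construction. Because $c$ is negative and $\rho$ positive, no literal of $c$ is satisfied by $\rho$, so $c|_\rho \in \Phi|_\rho$; all atoms occurring in $c|_\rho$ are then forced to $true$, so $c|_\rho$ collapses to the empty clause under unit propagation, giving $(\Phi|_\rho)^* = \bot$, i.e. $\Phi|_\rho \models^* \bot$.

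For $(\Rightarrow)$: I would route through the least model. Let $M$ be the least model of the definite Horn formula $\Phi^+ \wedge \rho$; by the standard characterisation $M = \rho \cup \{\,a \mid \Phi^+|_\rho \models^* a\,\}$, i.e. $M$ is exactly the set of atoms derivable by (positive) unit propagation once $\rho$ has been applied. Now use two classical facts: (i) unit propagation is refutation-complete for Horn formulas, so since $\Phi|_\rho$ is Horn, $\Phi|_\rho \models^* \bot$ iff $\Phi|_\rho$ is unsatisfiable; and (ii) $\Phi|_\rho$ is satisfiable iff $M$ already satisfies every clause of $\Phi^-$ — because any model of $\Phi|_\rho$ is in particular a model of $\Phi^+\wedge\rho$, hence a superset of $M$, and enlarging a model can only turn more negative clauses false. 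Chaining: $\Phi|_\rho \models^* \bot$ iff $M \not\models \Phi^-$, iff there is $c=\{\neg a_1,\dots,\neg a_m\}\in\Phi^-$ with $\{a_1,\dots,a_m\}\subseteq M$, which by the description of $M$ is exactly $\Phi^+|_\rho \models^* \bar c$ for that $c$.

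The main obstacle is not a computation but getting two pieces of bookkeeping right. First, the reading of $\Phi^+|_\rho \models^* \bar c$ must count atoms already fixed by $\rho$ as ``derived''; otherwise the degenerate case where all variables of some $c\in\Phi^-$ lie in $\rho$ — where $\Phi|_\rho$ contains $\bot$ outright — breaks the equivalence. Second, when unit propagating on $\Phi|_\rho$ the negative clauses of $\Phi^-$ can themselves fire and feed negative units back into definite clauses, so a purely syntactic proof (induction on the length of the unit-propagation refutation of $\Phi|_\rho$) has to untangle this interleaving by hand; the least-model detour above is precisely what lets us avoid it. Everything else — monotonicity of unit propagation, closure of the Horn property under restriction, and the two cited facts about Horn satisfiability — is routine.
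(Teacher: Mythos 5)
Your proof is correct and follows essentially the same route as the paper's: both directions hinge on the least model of the definite part $\Phi^+\wedge\rho$ (the paper's set $S$ of unit-propagated positive literals is exactly your $M$), extended by setting all remaining variables to false, together with monotonicity of unit propagation for the converse direction. The only differences are cosmetic: you cite refutation-completeness of unit propagation for Horn formulas where the paper re-derives the needed direction by exhibiting the model $S\cup\overline{(V(\Phi)\setminus S)}$, and you are more explicit than the paper in reading $\Phi^{+}|_{\rho}\models^{*}\bar{c}$ so that atoms of $c$ already fixed by $\rho$ count as derived.
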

 
 \begin{proof}
 ($\Rightarrow$)    Let us consider the formula $\Phi^{+}|_{\rho}$. Suppose that there is no clause $c \in \Phi^{-}$ such that $\Phi^{+}|_{\rho} \models^{*} \bar{c}$.
Let $S$ be the set of units literals of  $\Phi^{+}|_{\rho}$ including the literals of $\rho$. We can note that from $\Phi^{+}|_{\rho}$  only additional positive unit literals can be deduced by unit propagation ($S\supseteq \rho$). So $S$ is a set of positive literals.  $S  \cup  \overline{(V(\Phi) \setminus S)}$ is clearly a model of $\Phi|_{\rho}$. In fact, each clause of $\Phi^{+}|_{\rho}$ is a satisfied clause (its positive literal is in $S$) or contains at least one negative literal. Indeed, propagating positive literals over $\Phi^{+}$ leads to a formula where the remaining clauses contains a positive literal and at least one negative literal.  The remaining clauses of $\Phi^{-} \wedge  S$ are negative clauses before deleting from each clause the literals of $\bar{S}$. Then, by assigning the remaining variables $V(\Phi) \setminus S$ to $false$, we obtain a model $\mu = S\cup\overline{(V(\Phi) \setminus S)}$ of the formula $\Phi$. As $\rho\subseteq \mu$, this contradicts the assumption that $\Phi|_{\rho}$ is unsatisfiable. \\
 ($\Leftarrow$) From $\Phi^{+}|_{\rho} \models^{*} \bar{c}$, we have $(\Phi^{+}\wedge c)|_{\rho} \models^{*} \bot$. As $\Phi^{+}\wedge c\subseteq \Phi$, then $\Phi|_{\rho} \models^{*} \bot$. 
 \end{proof}
Given a horn formula $\Phi$, Proposition \ref{prop:hornP} expresses that  unsatisfiability under any interpretation made of a set of positives literals, is  caused by a clause from $\Phi^-$. As a cardinality constraint is usually encoded as a horn formula $\Phi$, to maintain GAC on the encoding of $y\rightarrow\Phi$, one only need to disjunctively add $\neg y$ to $\Phi^-$. 

\subsection{Conditional AtMostOne Constraint Encodings}
\label{sec:atMostOne}

Let us first consider the  conditional AtMostOne Constraint  $y \rightarrow \sum_{i=1}^{n} x_i \le 1$. Many encodings have been proposed to deal with the translation of AtMostOne constraint into CNF. 
Let us consider two standard encodings of this constraint. 

\subsubsection{Conditional AtMostOne Pairwise Encoding:}
The classical pairwise encoding can be obtained by considering the set of all binary negative clauses build over the set of variables $\{x_1, \ldots, x_n\}$ as described by the formula (\ref{form:AtMostOneP}). 
\begin{equation}
\label{form:AtMostOneP}
 \bigwedge_{1 \le i < j \le n} (\neg x_i \vee \neg x_j)
 \end{equation}
This naive formulation maintains generalized arc consistency and is in ${\cal O}(n)$ variables and ${\cal O}(n^2)$ clauses. The formula (\ref{form:AtMostOnePY}) encoding the conditional AtMostOne constraint $y \rightarrow \sum_{i=1}^{n} x_i \le 1$ is obtained  by simply adding $\neg y$ to all the clauses of the CNF formula (\ref{form:AtMostOneP}) obtained by pairwise encoding.   It is straightforward to remark that the obtained formula (\ref{form:AtMostOnePY}) allows to maintain generalized arc consistency. Indeed, any assignment of two literals $x_i$ and $x_j$ to true, allows to deduce $\neg y$ by unit propagation. On the other hand, if $y$ is assigned to $true$, the conditional constraint is reduced to a simple AtMostOne constraint which preserve generalized arc consistency. 

\begin{equation}
\label{form:AtMostOnePY}
 \bigwedge_{1 \le i < j \le n} ({\bf \neg y} \vee \neg x_i \vee \neg x_j)
 \end{equation}

\subsubsection{Conditional AtMostOne Sequential Counter \& Pigeon-Hole Encoding:}
The second encoding of the AtMostOne constraint is represented by formula (\ref{form:AtMostOneS}) obtained using sequential counter \cite{Sinz05}.
In \cite{JabbourSS14}, the authors shown that the same encoding is obtained using the pigeon-hole encoding described above and by applying an additional  step of variables elimination by resolution. In contrast to  pairwise encoding (\ref{form:AtMostOneP}), the one obtained by sequential counter (\ref{form:AtMostOneS}) is linear (${\cal O}(n)$ variables and clauses) thanks to the additional variables $\{p_1, \ldots, p_{n-1}\}$. Both encodings  (\ref{form:AtMostOneP}) and (\ref{form:AtMostOneS}) are known to maintain generalized  arc consistency.
\begin{equation}
\label{form:AtMostOneS}
\begin{array}{l}
  (\neg x_1 \vee p_1) \wedge ( \neg x_n \vee \neg p_{n-1}) \wedge \\
  \displaystyle \bigwedge_{1 < i < n} (\neg x_i \vee p_i) \wedge (\neg p_{i-1} \vee p_i) \wedge (\neg x_i \vee \neg p_{i-1}) 
 \end{array}
 \end{equation}
However, with the sequential counter based encoding, by adding $\neg y$ to all clauses of the formula (\ref{form:AtMostOneS}) we obtain a new formulation of the conditional AtMostOne constraint (formula (\ref{form:AtMostSY})) that does not maintain generalized arc consistency. 
\begin{equation}
\label{form:AtMostSY}
\begin{array}{l}
 ({\bf \neg y} \vee \neg x_1 \vee p_1) \wedge ( {\bf \neg y} \vee \neg x_n \vee \neg p_{n-1})  \wedge  \\
 \displaystyle \bigwedge_{1 < i < n} ({\bf \neg y} \vee \neg x_i \vee p_i) \wedge ({\bf \neg y} \vee \neg p_{i-1} \vee p_i) \wedge ({\bf \neg y} \vee \neg x_i \vee \neg p_{i-1}) 
\end{array}
 \end{equation}
Indeed, assigning two literals from $\{x_1, \ldots, x_n\}$ does not allow us to deduce $\neg y$ by unit propagation. For example, by assigning $x_1$ and $x_n$ to $true$, the two first clauses from (\ref{form:AtMostSY}) become binary. 

To maintain the generalized arc consistency for the conditional AtMostOne constraint using sequential counter or pigeon-hole based encoding,  $\neg y$ must be added to a subset of the clauses as depicted in the formula (\ref{form:ImpAtMostOneUP}).
 
 \begin{equation}
\label{form:ImpAtMostOneUP}
\begin{array}{l}
 (\neg x_1 \vee p_1) \wedge ({\bf \neg y} \vee \neg x_n \vee \neg p_{n-1}) \wedge \\
  \displaystyle \bigwedge_{1 < i < n} (\neg x_i \vee p_i) \wedge (\neg p_{i-1} \vee p_i) \wedge ({\bf \neg y}  \vee  \neg x_i \vee \neg p_{i-1}) 
\end{array}
 \end{equation}
  \begin{proposition}
	The CNF formula (\ref{form:ImpAtMostOneUP}) encoding  $y \rightarrow \sum_{i=1}^{n} x_i \le 1$ using sequential counter or pigeon-hole encoding maintains generalized arc consistency by unit propagation. 
 \end{proposition}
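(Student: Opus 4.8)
The plan is to verify the two conditions that, by the notion of GAC-preservation for conditional constraints recalled above, are exactly what is required here: (a) whenever $y$ is assigned $true$, unit propagation on (\ref{form:ImpAtMostOneUP}) enforces GAC on the plain constraint $\sum_{i=1}^{n} x_i \le 1$; and (b) whenever the current partial assignment already falsifies $\sum_{i=1}^{n} x_i \le 1$, unit propagation derives $\neg y$. First I would record the structure of (\ref{form:ImpAtMostOneUP}): writing $\Phi$ for the sequential counter encoding (\ref{form:AtMostOneS}), the clauses of (\ref{form:ImpAtMostOneUP}) are precisely those of $\Phi^{+}$ — the clauses $(\neg x_1 \vee p_1)$, $(\neg x_i \vee p_i)$ and $(\neg p_{i-1} \vee p_i)$, all kept unconditional — together with $\neg y \vee c$ for each $c \in \Phi^{-}$, i.e. each binary negative clause $(\neg x_i \vee \neg p_{i-1})$ and $(\neg x_n \vee \neg p_{n-1})$. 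The pigeon-hole variant is handled identically, since \cite{JabbourSS14} show it collapses to $\Phi$ after eliminating the auxiliary variables by resolution, and that elimination commutes with the operation of adding $\neg y$ to the negative clauses.

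For (a): substituting $y = 1$ deletes every occurrence of $\neg y$, so $(\ref{form:ImpAtMostOneUP})|_{y}$ is literally $\Phi$, the plain sequential counter encoding of $\sum_{i=1}^{n} x_i \le 1$, which is known to preserve GAC \cite{Sinz05}. This case also subsumes every situation in which $y$ is $true$ and some of the $x_i$ are assigned. Moreover, when $y$ is unassigned and the constraint is not yet falsified, the GAC propagator of $y \rightarrow \sum x_i \le 1$ prunes nothing, since every value is supported by setting $y$ to $false$; and since none of the $\neg y$-augmented clauses is unit in that situation, unit propagation also prunes nothing, so the two behaviours agree.

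For (b): assume $x_a$ and $x_b$ are both assigned $true$ with $a < b$, hence $a \le n-1$ and $b \ge 2$. I would trace the propagation chain explicitly: the clause $(\neg x_1 \vee p_1)$ if $a = 1$, or $(\neg x_a \vee p_a)$ if $1 < a < n$, forces $p_a$; then the clauses $(\neg p_{i-1} \vee p_i)$ for $i = a+1, \dots, b-1$ force $p_{a+1}, \dots, p_{b-1}$ in turn; finally the clause $(\neg y \vee \neg x_b \vee \neg p_{b-1})$ — or $(\neg y \vee \neg x_n \vee \neg p_{n-1})$ when $b = n$ — has $x_b$ and $p_{b-1}$ both $true$ and therefore forces $\neg y$. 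Equivalently, one may invoke Proposition~\ref{prop:hornP}: adding $y$ to (\ref{form:ImpAtMostOneUP}) restores the Horn formula $\Phi$, and since $\{x_a, x_b\}$ falsifies $\sum x_i \le 1$ one has $\Phi \models^{*} \bot$ under that assignment, whence $\neg y$ is unit-propagated from (\ref{form:ImpAtMostOneUP}). Soundness — that unit propagation never derives a literal GAC would not — is automatic, because (\ref{form:ImpAtMostOneUP}) is a correct CNF encoding of $y \rightarrow \sum x_i \le 1$, so nothing it unit-propagates can be unsound.

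I do not anticipate a genuine obstacle; the only points needing care are the boundary indices $a = 1$ and $b = n$, which is exactly why the dedicated clauses $(\neg x_1 \vee p_1)$ and $(\neg x_n \vee \neg p_{n-1})$ are singled out (the former kept unconditional as a generator of $p_1$, the latter $\neg y$-augmented as the trigger of $\neg y$), and checking in case (a) that $(\ref{form:ImpAtMostOneUP})|_{y}$ is the GAC-preserving encoding (\ref{form:AtMostOneS}) itself rather than some weaker fragment of it.
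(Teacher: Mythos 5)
Your proposal is correct and follows essentially the same route as the paper's own proof: it invokes Proposition~\ref{prop:hornP} (the Horn-structure argument, i.e.\ that $\neg y$ need only be added to the negative clauses), traces the same unit-propagation chain $p_a, p_{a+1}, \dots, p_{b-1}$ culminating in the clause $(\neg y \vee \neg x_b \vee \neg p_{b-1})$ to derive $\neg y$, and observes that setting $y$ to $true$ recovers the plain GAC-preserving encoding. Your version is slightly more careful about the boundary indices $a=1$ and $b=n$ and about the case where $y$ is unassigned and the constraint not yet falsified, but the underlying argument is the same.
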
 
 \begin{proof}
 The proof of this proposition is a direct consequence of Proposition \ref{prop:hornP}. In fact, the encoding  of $\sum_{i=1}^{n} x_i \le 1$ is a horn formula. As a consequence when more than one literal from $\{x_1, \ldots, x_n\}$ are assigned to $true$, then a clause from the negative clauses of the encodings become $false$. Consequently, to encode $y \rightarrow \sum_{i=1}^{n} x_i \le 1$, it is sufficient to add $\neg y$ to the two negative clauses as shown in Constraint (\ref{form:ImpAtMostOneUP}). Indeed, suppose that we assign two arbitrary variables $x_i$ and $x_j$ (with $1<i<j<n$) to $true$. From the assignment of $x_i$ to $true$ and the clause $(\neg x_i\vee p_i)$, we deduce a unit literal $p_i$.
 Then, from the clause $(\neg p_i\vee p_{i+1})$ we deduce another unit literal $p_{i+1}$. This chain of  unit propagated literals continue until $p_{j-1}$.  Now if we assign $x_j$ to $true$, the clause $(\neg y \vee \neg x_j\vee \neg p_{j-1})$ allows us to deduce $\neg y$, as $p_{j-1}$ (propagated unit literal) and $x_j$ are assigned to $true$. Let us consider another case, where $x_1$ is assigned to $true$. Such assignment allows us to deduce thanks to unit propagation the literals $p_1, \dots, p_{n-1}$.  Then assigning any other literal $x_j$ (with $j\neq 1$), we deduce the literal $\neg y$, thanks to the clause $(\neg y\vee \neg x_j\vee \neg p_{j-1})$. Obviously assigning $y$ to $true$ leads to the classical encoding of the AtMostOne constraint which for the sequential counter and pigeon hole encoding preserve generalized arc consistency by unit propagation. 
 
 \end{proof}
 
 \subsubsection{Conditional AtMostOne Sorting Networks Encoding:}
 The sorting network encoding of the AtMostOne conditional constraint is similar to the conditional AtMostK constraint described in Section \ref{sec:sortedAtMk}. It is defined as: 
  $$\Phi_{SN}^{n, 1}(X;Z) \wedge ( {\bf \neg y} \vee \neg z_{2})$$ 
 
 Proposition \ref{prop:sortedAtMk} shows that the encoding, for any value of $k>0$, maintains generalized arc consistency by unit propagation. 

 \subsection{Conditional AtMostK Constraint Encodings}
 Let us now consider the general case of Conditional AtMostK Constraint.
\subsubsection{Pigeon-Hole based Encoding of Conditional Cardinaility:}

 
%

In Subsection \ref{sec:phc}, we reviewed the pigeon hole based encoding of the cardinality constraint  AtLeastK of the form $ \sum_{i=1}^{n} x_i \geq k$ proposed in \cite{JabbourSS14}. For clarity and consistency reasons, and as the constraint AtMostK $ \sum_{i=1}^{n} x_i \leq k$ can be equivalently rewritten as an AtLeastK constraint $\sum_{i=1}^{n} \neg x_i \geq n-k$,  for the pigeon hole based encoding, we consider the conditional AtLeastK constraint $y \rightarrow \sum_{i=1}^{m} x_i \geq k$. 

%
%
To preserve GAC, $\neg y$ must be added  to a limited subset of clauses of $ph{\cal P}_n^{k}$ encoding. Only the positives clauses of constraint (\ref{form:pigeonMP}) are augmented with $\neg y$.

\begin{equation}
\label{form:pigeonMPY}
 	\bigwedge_{1\le i \le k} (\neg y \vee \bigvee_{1 \le j < n-k+1} p_{ij})
 \end{equation}

 \begin{proposition}
	The encoding $(\ref{form:pigeonMPY}) \wedge  (\ref{form:pigeonXP}) \wedge (\ref{form:pigeonPP})$ preserves the generalized arc consistency of $y \rightarrow \sum_{i=1}^{n} x_i \geq k$.
 \end{proposition}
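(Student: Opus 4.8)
The plan is to reduce the claim to the generalized arc consistency of the \emph{unconditional} pigeon-hole encoding $ph{\cal P}_n^k = (\ref{form:pigeonMP})\wedge(\ref{form:pigeonXP})\wedge(\ref{form:pigeonPP})$ (granted by~\cite{JabbourSS14}) by way of Proposition~\ref{prop:hornP}. Write $E$ for the encoding of the proposition; it differs from $ph{\cal P}_n^k$ only by the literal $\neg y$ appended to the clauses of~(\ref{form:pigeonMP}). Although $ph{\cal P}_n^k$ is not literally Horn, it becomes Horn once every variable $x_i$ and every $p_{ij}$ is complemented, and in that renaming the clauses of~(\ref{form:pigeonMP}) are exactly the purely negative clauses $\Phi^-$, while (\ref{form:pigeonXP}) and (\ref{form:pigeonPP}) are precisely the clauses $\Phi^+$ with a single positive literal. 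Hence, for a set $F\subseteq\{1,\dots,n\}$ and the interpretation $\rho$ assigning false to $\{x_i\mid i\in F\}$ (a set of positive literals in the renamed formula), Proposition~\ref{prop:hornP} yields: $ph{\cal P}_n^k|_\rho\models^* \bot$ iff for some ``pigeon'' $i_0$ the clauses $(\ref{form:pigeonXP})\wedge(\ref{form:pigeonPP})$ restricted by $\rho$ propagate all of $p_{i_0,1},\dots,p_{i_0,n-k+1}$ to false.

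Next I would verify GAC of $E$ by cases on the status of $y$ under the current partial assignment $\sigma$ on the constraint variables $y,x_1,\dots,x_n$; we may assume $\sigma$ assigns only false on the $x_i$'s, say $\sigma(x_i)=$ false for $i\in F$, since an $x_i$ fixed to true only satisfies and removes a clause of~(\ref{form:pigeonXP}). If $\sigma(y)=$ true, unit propagation replaces every clause of~(\ref{form:pigeonMPY}) by its counterpart in~(\ref{form:pigeonMP}), so $E|_\sigma$ reduces to $ph{\cal P}_n^k|_\rho$ and the conclusion is immediate from~\cite{JabbourSS14}: a conflict is detected iff more than $n-k$ of the $x_i$ are false, and the remaining $k$ variables are forced to true when exactly $n-k$ are false. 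If $\sigma(y)=$ false, every clause of~(\ref{form:pigeonMPY}) is satisfied, so $E|_\sigma$ is $(\ref{form:pigeonXP})\wedge(\ref{form:pigeonPP})$ plus the units $\{\neg x_i\mid i\in F\}$; this is satisfiable (set every $p_{ij}$ to false) and, each $x_i$ occurring only positively in it, cannot unit-propagate any literal over $y,x_1,\dots,x_n$ — matching the fact that the constraint is then vacuously true.

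The substantive case is $\sigma(y)$ unassigned, where GAC demands $\neg y$ be propagated precisely when $\sum_{i=1}^n x_i\geq k$ is already violated, i.e. when at least $n-k+1$ of the $x_i$ are false, and nothing otherwise. While $y$ is unassigned, a clause $(\neg y\vee c)$ of~(\ref{form:pigeonMPY}) can become unit only after all literals of $c$ are falsified, in which case it produces $\neg y$; it never forces a $p_{ij}$ to true. Consequently unit propagation on $E|_\sigma$ behaves exactly like unit propagation on $(\ref{form:pigeonXP})\wedge(\ref{form:pigeonPP})$ restricted by $\rho$ — which only ever sets $p$-variables to false — followed possibly by one final collapse of a clause of~(\ref{form:pigeonMPY}) to $\neg y$. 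Combining this with the displayed equivalence: $\neg y\in (E|_\sigma)^*$ iff a whole row $p_{i_0,1},\dots,p_{i_0,n-k+1}$ is forced to false, iff $ph{\cal P}_n^k|_\rho\models^* \bot$ (Proposition~\ref{prop:hornP}), iff $\sum_{i=1}^n x_i\geq k$ is violated (GAC of $ph{\cal P}_n^k$) — exactly as required. For the ``nothing otherwise'' direction I would use soundness of unit propagation together with explicit models: when the cardinality constraint is not violated, $E|_\sigma\wedge y$ is a satisfiable copy of $ph{\cal P}_n^k|_\rho$, so $\neg y$ is not entailed, hence not unit-propagated; and $E|_\sigma\wedge\neg y$ is satisfiable with any still-unassigned $x_i$ set either way, so no $x_i$ is entailed.

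I expect the one delicate point to be this unassigned-$y$ case, and specifically the step identifying the unit-propagation behaviour of $E|_\sigma$ with that of $\Phi^+|_\rho$. One cannot merely replay the refutation of $ph{\cal P}_n^k|_\rho$ inside $E|_\sigma$, because a clause of~(\ref{form:pigeonMP}) may become unit and propagate some $p_{ij}$ to true in $ph{\cal P}_n^k|_\rho$, reaching the conflict along a path that the conditional clauses of~(\ref{form:pigeonMPY}) block. The Horn-form observation is exactly what removes this obstacle: Proposition~\ref{prop:hornP} guarantees that an unsatisfiable $ph{\cal P}_n^k|_\rho$ always admits a ``monotone'' refutation in which an entire pigeon row is forced to false using only the definite clauses~(\ref{form:pigeonXP}) and~(\ref{form:pigeonPP}), and that is precisely the propagation that $E|_\sigma$ still performs.
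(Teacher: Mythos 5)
Your proof is correct and follows essentially the same strategy as the paper's: dualize Proposition~\ref{prop:hornP} to the reverse-Horn setting to conclude that any conflict of the unconditional pigeon-hole encoding under negative assignments of the $x_i$ must be detected through a positive clause of~(\ref{form:pigeonMP}), so that appending $\neg y$ only to those clauses preserves GAC. Your write-up is in fact more complete than the paper's, which merely asserts the dualization and then illustrates the propagation on Example~\ref{ex:ph}; in particular, your explicit case analysis on the status of $y$ and the observation that the clauses of~(\ref{form:pigeonMPY}) can never propagate a $p_{ij}$ positively while $y$ is unassigned (so unit propagation in the conditional encoding coincides with that of the definite clauses $(\ref{form:pigeonXP})\wedge(\ref{form:pigeonPP})$) are exactly the details the paper leaves implicit.
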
 
 \begin{proof}
Let us note that the pigeon hole based encoding of $\sum_{i=1}^{n} x_i \geq k$ is a reverse-horn formula. So the Proposition \ref{prop:hornP} can be slightly modified to be adapted to the reverse-horn case by considering  assignments of variables to $false$ and positive clauses. As a consequence, one can conclude that adding $\neg y$ to the positive clauses is sufficient to maintain GAC by unit propagation.
 Let us sketch the proof using Example \ref{ex:ph}. The CNF encoding of the conditional constraint $y\rightarrow\sum_{i=1}^{6} x_i \geq 4$ is obtained from the CNF formula encoding $php{\cal P}_6^4$ by disjunctively adding $\neg y$ to the positive clauses (clauses on the right hand side). As we can observe the obtained formula remains in the reverse horn class. Let us show that by assigning any three variables among $x_1$ to $x_6$ to $false$, we deduce $\neg y$ by unit propagation. Suppose that $x_1$, $x_2$ and $x_3$ are assigned to $false$. From the second and third set of clauses, we deduce by unit propagation $\neg p_{11}$, $\neg p_{12}$, $\neg p_{21}$, $\neg p_{13}$, $\neg p_{22}$ and $\neg p_{31}$. Consequently, from the clause $(\neg y\vee p_{11}\vee p_{12}\vee p_{13})$, we deduce $\neg y$. Let us consider another case, say  $x_2$, $x_4$ and $x_6$ are assigned $false$. By unit propagation, we deduce $\neg p_{12}$, $\neg p_{21}$, $\neg p_{23}$, $\neg p_{32}$, $\neg p_{41}$, $\neg p_{43}$, $\neg p_{31}$, $\neg p_{41}$, $\neg p_{42}$. From the clause $(\neg y\vee p_{41}\vee p_{42}\vee p_{43})$, we deduce $\neg y$. Similarly, any other assignment of three variables from $x_1$ to $x_6$ produces $\neg y$ by unit propagation. 
 \end{proof}
   
\subsubsection{Sorted Networks based Encoding of Conditional Cardinality:}
\label{sec:sortedAtMk}
Let us now consider the sorted networks based encoding of the conditional AtMostK constraint $y \rightarrow \sum_{i=1}^{m} x_i \leq k $.  Using the sorted networks encoding of the AtMostK constraint (see Section \ref{sec:SNE}), its conditional variant can be represented by $y \rightarrow (\Phi_{SN}^{n, k}(X;Z) \wedge \neg z_{k+1})$  which is equivalent to the CNF formula $ ({\bf \neg y} \vee \Phi_{SN}^{n, k}(X;Z)) \wedge ( {\bf \neg y} \vee \neg z_{k+1} )$. 
As discussed in Section \ref{sec:SNE}, the basic comparator of two propositional variables, $2$-$ˆ'cmp(x_i, x_j ; z_i , z_j)$, is a building bloc of the sorted networks based encoding $\Phi_{SN}^{n, k}(X;Z)$, i.e., a conjunction of multiple formulas encoding two comparator basic components. 
 Consequently, the conditional formula $ (\neg y \vee \Phi_{SN}^{n, k}(X;Z))$ can be translated into CNF by adding $\neg y$ to all the clauses of each basic two comparators, which leads to multiple conditional two comparators of the form $ y \rightarrow  2$-$ˆ'cmp(x_i, x_j ; z_i , z_j)$, written in a clausal form as:
\begin{equation}
\label{condComp}
({\bf \neg y} \vee \neg x_i \vee z_i)\wedge ({\bf \neg y} \vee \neg x_j \vee z_i) \wedge ({\bf \neg y} \vee \neg x_i \vee \neg x_j \vee z_j)
\end{equation}

As we can see, assigning any input literal $x_i$ or $x_j$ to a conditional two comparators does not allow us to deduce any literal by unit propagation as all the clauses from (\ref{condComp}) become binary. In fact, to maintain  generalized arc consistency for the conditional AtMostK
constraint using sorting networks-based encoding, $\neg y$ must be disjunctively added only to the unit clause $\neg z_{k+1}$: 
 \begin{equation}
 \label{condCompY}
   \Phi_{SN}^{n, k}(X;Z) \wedge ( {\bf \neg y} \vee \neg z_{k+1} )
 \end{equation}
 
\begin{proposition}
\label{prop:sortedAtMk}
	The encoding $\Phi_{SN}^{n, k}(X;Z) \wedge ( {\bf \neg y} \vee \neg z_{k+1} )$ preserves the generalized arc consistency of $y \rightarrow \sum_{i=1}^{n} x_i \leq k$.
 \end{proposition}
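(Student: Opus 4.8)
The plan is to check the two halves of the GAC requirement for a conditional constraint separately, by a case split on the value a partial assignment gives to $y$, and to reduce each case to an already available fact. Throughout I write $\Phi := \Phi_{SN}^{n, k}(X;Z)\wedge\neg z_{k+1}$ for the ordinary (non-conditional) sorting-network encoding of $\sum_{i=1}^{n}x_i\le k$ and $\Phi' := \Phi_{SN}^{n, k}(X;Z)\wedge(\neg y\vee\neg z_{k+1})$ for the encoding in the statement. The structural remark I rely on is that every clause of $\Phi_{SN}^{n, k}(X;Z)$ — the two-comparator clauses $\neg x_i\vee z_i$, $\neg x_j\vee z_i$, $\neg x_i\vee\neg x_j\vee z_j$ and their chained copies — contains exactly one positive literal; hence $\Phi_{SN}^{n, k}(X;Z)$ is Horn with no negative clause, the only negative clause of $\Phi$ is $\neg z_{k+1}$, and $y$ is a pure literal of $\Phi'$ occurring only in $\neg y\vee\neg z_{k+1}$.

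First I would dispose of the easy regimes. When $y$ is assigned $true$, the clause $\neg y\vee\neg z_{k+1}$ collapses to the unit $\neg z_{k+1}$ and, since $y$ occurs nowhere else, $\Phi'|_y$ is exactly $\Phi$; by the correctness of the sorting-network encoding recalled in Section \ref{sec:SNE} (\cite{EenS06,AsinNOR11}), $\Phi$ maintains GAC on $\sum x_i\le k$ by unit propagation (once $k$ inputs are $true$ the others are forced to $false$, and $k+1$ true inputs trigger a conflict), so in this regime $\Phi'$ maintains GAC on $y\rightarrow\sum x_i\le k$. When $y$ is assigned $false$ the situation is vacuous: $\neg y\vee\neg z_{k+1}$ is satisfied and drops out, $\Phi_{SN}^{n, k}(X;Z)$ alone is satisfiable under every input assignment, and the constraint demands no pruning.

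The remaining regime is $y$ unassigned, where GAC demands exactly one inference: $\neg y$ must be derived precisely when the current assignment already forces at least $k+1$ of the $x_i$ to $true$, and nothing must be derived when it forces at most $k$. Let $N$ be the inputs set to $false$ and $\rho$ the inputs set to $true$, and fold $N$ into the formula. Conditioning on $N$ only deletes clauses of $\Phi_{SN}^{n, k}(X;Z)$ (inputs occur only negatively), so $\Phi|_N$ is still Horn with $\neg z_{k+1}$ as its unique negative clause. I would then invoke Proposition \ref{prop:hornP} on $\Phi|_N$ with the positive interpretation $\rho$: it gives $\Phi|_N|_\rho\models^{*}\bot$ iff $\Phi_{SN}^{n, k}(X;Z)|_N|_\rho\models^{*}z_{k+1}$. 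When $|\rho|\ge k+1$ the right-hand side holds — this is the standard property of the sorter that $j$ true inputs unit-propagate $z_j$ to $true$ — so in $\Phi'|_N|_\rho$ the clause $\neg y\vee\neg z_{k+1}$ collapses under unit propagation to $\neg y$, which is what GAC demands here. When $|\rho|\le k$, unit propagation on $\Phi_{SN}^{n, k}(X;Z)|_N|_\rho$ does not produce $z_{k+1}$, the clause $\neg y\vee\neg z_{k+1}$ has no assigned literal, and so neither $y$ nor $z_{k+1}$ nor any remaining $x_j$ is touched: no spurious inference occurs. This exhausts the cases.

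The step I expect to be the real work is not the Boolean bookkeeping but the appeal — twice — to properties of $\Phi_{SN}^{n, k}(X;Z)$ that this section cites rather than reproves: that $z_j$ is unit-propagated to $true$ as soon as $j$ inputs are $true$ (used for the derivation of $\neg y$, and, dually, its failure when $|\rho|\le k$), and that $\Phi_{SN}^{n, k}(X;Z)\wedge\neg z_{k+1}$ is GAC for $\sum x_i\le k$ (used for the $y=true$ regime). Both are the classical facts of \cite{EenS06,AsinNOR11}, and the case split on $y$ is arranged precisely so that the conditional encoding inherits them verbatim. A minor point to keep straight is that GAC here concerns only the interface variables $\{y,x_1,\dots,x_n\}$, the sorter outputs $Z$ being auxiliary and never assigned directly, which is why conditioning on assignments to the $x_i$ (and $y$), and on nothing else, is all that has to be examined.
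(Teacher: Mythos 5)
Your proposal is correct and follows essentially the same route as the paper's own proof: a case split on $y$ (reducing the $y=true$ regime to the cited GAC property of the plain sorting-network encoding) combined with the observation that a violated AtMostK can only surface through $z_{k+1}$ being forced to $true$, whereupon the single clause $\neg y \vee \neg z_{k+1}$ yields $\neg y$ by unit propagation. Your version is tighter than the paper's in that it explicitly invokes Proposition~\ref{prop:hornP} (which the paper states as the general strategy but does not formally apply here) and correctly treats $z_{k+1}$ as a derived rather than directly assigned literal, but the underlying argument is the same.
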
 
 \begin{proof}
 In case where $y$ is assigned to $true$, the simplified formula represents the AtMostK constraint encoded using sorted networks.
 No, we consider two cases depending on the truth-value of $z_{k+1}$. In the first case, if $z_{k+1}$ is assigned to $true$, we deduce $\neg y$ by unit propagation. Indeed, as the outputs are sorted in descending order, this means that the AtMostK constraint is $false$, to satisfy the conditional AtMostK, one must assign $y$ to $false$.
In the second case, if the truth value of $z_{k+1}$ is $false$, this means that the AtMostK constraint is $true$, consequently, no matter is the value of $y$.  
 \end{proof}

\section{Sequential Unary Counter based Encoding of Conditional Cardinality Constraint}

We have shown in Subsection \ref{sec:atMostOne} how the conditional AtMostOne constraint can be encoded using the sequential counter-based encoding, while preserving the GAC property. Let us now consider the general case of Sequential counter-based encoding of conditional AtMostK constraint.
 The clauses $ (\ref{eq1}) \wedge (\ref{eq3}) \wedge (\ref{eq4})$ allow us to propagate any assignment of $x_i$ to synchronize all the intermediate sequential counters,  while the clauses   $ (\ref{eq2}) \wedge (\ref{eq5})$ allow us to detect any inconsistency of the constraint AtMostK. Indeed, by adding $\neg y$ to all the clauses, the literals $s_{ij}$ can not be propagated from any assignment of $x_i$ variables which prevent the synchronization operation. To preserve the GAC property, we should add $\neg y$ only to the clauses  of $(\ref{eq2}) \wedge (\ref{eq5})$ as shown in the following formula:
 \begin{equation}
\label{yeq1}
( \neg x_1 \vee s_{1,1})
\end{equation} 
\begin{equation}
\label{yeq2}
\bigwedge_{1 < j \leq k} ({\bf \neg y} \vee \neg s_{1,j})
\end{equation}  
\begin{equation}
\label{yeq3}
\bigwedge_{1 < i < n} ( \neg x_i \vee s_{i,1})\wedge ( \neg s_{i-1,1} \vee s_{i,1})
\end{equation} 
\begin{equation}
\label{yeq4}
\bigwedge_{1 < i < n} \bigwedge_{1<j\leq k} ( \neg x_i \vee \neg s_{i-1,j-1} \vee s_{i,j} )\wedge( \neg s_{i-1,j} \vee s_{i,j})
\end{equation}
\begin{equation}
\label{yeq5}
\bigwedge_{1 < i \leq n} ({\bf \neg y} \vee \neg x_i \vee \neg s_{i-1,k})
\end{equation}

 \begin{example}
 \label{ex:secount}
 Let us consider the following constraint $y\rightarrow x_1 +x_2+x_3 \leq 2$ which is encoded as follows: \\
 \noindent
 $(\neg x_1 \vee s_{1,1})\wedge 
({\bf \neg y} \vee \neg s_{1,2})\wedge
(\neg x_2 \vee s_{2,1})\wedge (\neg s_{1,1} \vee  s_{2,1})\wedge
(\neg x_2 \vee \neg s_{1,1} \vee s_{2,2})\wedge (\neg s_{1,2} \vee  s_{2,2})\wedge 
({\bf \neg y} \vee \neg x_2 \vee \neg s_{1,2})\wedge 
({\bf \neg y} \vee \neg x_3 \vee \neg s_{2,2})$\\
Assume that we start by assigning $x_1$ to $true$ then, the literals $s_{1,1}$ and $s_{2,1}$ are deduced by unit propagation. Next, if we assign $x_2$ to $true$, the literal $s_{2,2}$ is unit propagated. Finally by assigning $x_3$ to $true$ , which violates the constraint, the literal $\neg y$ is propagated thanks to the last clause.
\end{example}

\begin{proposition}
The encoding $(\ref{yeq1}) \wedge (\ref{yeq2}) \wedge (\ref{yeq3}) \wedge (\ref{yeq4})  \wedge (\ref{yeq5})$ preserves the generalized arc consistency of $y \rightarrow \sum_{i=1}^{n} x_i \leq k$.
\end{proposition}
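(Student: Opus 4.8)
The plan is to reduce everything to Proposition~\ref{prop:hornP}, in the same spirit as the conditional AtMostOne argument. First I would recall that the sequential unary counter encoding of $\sum_{i=1}^{n} x_i \le k$, namely $\Phi := (\ref{eq1}) \wedge (\ref{eq2}) \wedge (\ref{eq3}) \wedge (\ref{eq4}) \wedge (\ref{eq5})$, is a Horn formula, and to identify its parts explicitly: the negative clauses are exactly those of $(\ref{eq2})$ and $(\ref{eq5})$, so $\Phi^- = (\ref{eq2}) \wedge (\ref{eq5})$, while $\Phi^+ = (\ref{eq1}) \wedge (\ref{eq3}) \wedge (\ref{eq4})$ gathers the clauses with exactly one positive literal. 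With this notation, the proposed conditional encoding $(\ref{yeq1}) \wedge (\ref{yeq2}) \wedge (\ref{yeq3}) \wedge (\ref{yeq4}) \wedge (\ref{yeq5})$ is precisely $\Phi^+ \wedge \bigwedge_{c \in \Phi^-}(\neg y \vee c)$. Since the $s_{i,j}$ are fresh, one checks that, projected onto $\{x_1,\dots,x_n,y\}$, this formula is logically equivalent to $y \rightarrow (\sum_{i=1}^n x_i \le k)$: when $y$ is $false$ every clause of $(\ref{yeq2})$ and $(\ref{yeq5})$ is satisfied and the remaining Horn-positive clauses can always be satisfied by setting the $s_{i,j}$ large enough, and when $y$ is $true$ the formula becomes $\Phi$. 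Hence unit propagation on the encoding is sound.

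Next I would establish GAC by the two cases the definition requires. If $y$ is assigned $true$, restricting the encoding by $y$ deletes every occurrence of $\neg y$ and leaves exactly $\Phi$, which is Sinz's encoding of $\sum_{i=1}^n x_i \le k$ and maintains GAC by unit propagation \cite{Sinz05}; so the cardinality constraint is propagated correctly. The interesting case is when $y$ is still unassigned but the current partial assignment already sets more than $k$ of the $x_i$'s to $true$, where $\neg y$ must be derived by unit propagation. Let $\rho$ be the set of $x_i$'s currently assigned $true$; since $|\rho| > k$, Sinz's encoding detects the violation, i.e. $\Phi|_\rho \models^{*} \bot$. By Proposition~\ref{prop:hornP} there is a negative clause $c \in \Phi^-$ with $\Phi^+|_\rho \models^{*} \bar c$. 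Because $\Phi^+$ occurs unchanged in the conditional encoding, the very same chain of unit propagations from $\Phi^+|_\rho$ still makes true every literal of $\bar c$ (these are $x$- and $s$-literals, never $y$); the clause of the conditional encoding corresponding to $c$ is $(\neg y \vee c)$, so once all literals of $\bar c$ are true this clause reduces to the unit $\neg y$. Monotonicity of unit propagation then yields the same conclusion from the actual, possibly larger, partial assignment. Example~\ref{ex:secount} exhibits exactly this chain: $x_1$ propagates $s_{1,1}$ and $s_{2,1}$, then $x_2$ propagates $s_{2,2}$, then $x_3$ together with $(\neg y \vee \neg x_3 \vee \neg s_{2,2})$ yields $\neg y$.

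Finally I would dispose of the remaining situation --- $y$ unassigned and at most $k$ of the $x_i$'s set to $true$ --- where GAC demands no propagation at all: an unassigned $x_j$ may still be taken $true$ (extend with $y$ set $false$) or $false$, and $y$ may still be taken $true$, so nothing has to be deduced, and by the equivalence noted above nothing unsound is deduced either. The main obstacle is the second case: one must be sure that disjunctively adding $\neg y$ to the negative clauses does not disturb the ``synchronisation'' of the partial-sum counters, which is performed entirely inside $\Phi^+$ --- and this is exactly what Proposition~\ref{prop:hornP} delivers, once one observes that a violation of $\sum x_i \le k$ can only be produced by setting $x_i$'s to $true$, i.e. by an all-positive interpretation $\rho$, which is the precise hypothesis of that proposition.
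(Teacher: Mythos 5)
Your proof is correct and follows essentially the same route as the paper's: both rest on the observation that Sinz's encoding is Horn with $(\ref{eq2})\wedge(\ref{eq5})$ as its negative part, invoke Proposition~\ref{prop:hornP} to show that adding $\neg y$ only to those negative clauses suffices, and handle the $y=true$ case by reduction to the unconditional encoding. Your write-up merely makes explicit the details that the paper compresses into ``apply Proposition~\ref{prop:hornP}'' and ``generalize Example~\ref{ex:secount}''.
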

  \begin{proof}
The encoding based on sequential counter of the cardinality constraint is also a horn formula. Consequently, we can apply the result of Proposition \ref{prop:hornP} to conclude that $\neg y$ must be added only to the negative clauses in order to preserve the generalized arc consistency. The proof is a simple generalization of those sketched in Example \ref{ex:secount}.
  \end{proof}

\section{SAT-based Association Rules Mining: A Case Study} 
We now present an application case, the problem of mining non-redundant association rules, whose encoding involves many conditional atMostOne constraints.  

\subsection{Association Rules Mining}
 Let $\Omega$ be a finite non empty set of symbols, called {\it items}. 
 We use the letters $a$, $b$, $c$, etc. to range over the elements of $\Omega$.
 An {\it itemset} $I$ over $\Omega$ is defined as a subset of $\Omega$, i.e., $I\subseteq \Omega$. We use $2^{\Omega}$ to denote the set of itemsets over $\Omega$ and 
 we use the capital letters $I$, $J$, $K$, etc. to range over the elements of $2^{\Omega}$.
A {\it transaction} is an ordered pair $(i, I)$ where $i$ is a natural number, called {\it transaction identifier}, and $I$ an itemset, i.e.,  $(i, I)\in \mathbb{N}\times 2^{\Omega}$. 
 A {\it transaction database} $\cal D$ is defined as a finite non empty set of transactions (${\cal D}\subseteq \mathbb{N}\times 2^{\Omega}$) where each transaction identifier  
 refers  to a unique itemset. 
 The {\it cover} of  an itemset  $I$  in a transaction database $\cal D$ is defined as 
 \textcolor{black}{${\cal C}over(I,{\cal D}) =  \{i\in \mathbb{N}\mid (i,J)\in {\cal D}~and~I\subseteq J\}$}. 
The {\it support} of $I$ in $\cal D$ corresponds to the cardinality of  ${\cal C}over(I,{\cal D})$, i.e., $Support(I,{\cal D})= |{\cal C}over(I,{\cal D})|$. 
An itemset $I\subseteq \Omega$ such that $Support(I,{\cal D})\geq 1$ is  a \emph{closed itemset} if, for all itemsets $J$ with $I\subset J$, 
$Support(J,{\cal D}) <Support(I,{\cal D})$. 

\begin{example}
 Let us consider the transaction database $\cal D$ depicted in Table~\ref{tab:tabEx}. We have ${\cal C}over(\{c,d\},{\cal D})= \{1,2,3,4,5\}$ and 
 $Support(\{c,d\},{\cal D})= 5$ while $Support(\{f\},{\cal D})= 3$. The itemset $\{c, d\}$ is closed, while $\{f\}$ is not.
  \begin{table}[h!]
  {\scriptsize 
\begin{minipage}[t]{.47\linewidth}
   \begin{tabular}{c|c}
\hline
tid &  {Transactions}\\
\hline
1 & $~~~~~~C~~D~~E~~F~~G~~$\\
2 & $~~~~~~C~~D~~E~~F~~G~~$\\
3 & $A~~B~~C~~D~~~~~~~~~~~$\\
4 & $A~~B~~C~~D~~~~~F~~~~~$\\
5 & $A~~B~~C~~D~~~~~~~~~~~$\\
6 & $~~~~~~C~~~~~E~~~~~~~$\\
\hline
\end{tabular}
\caption{A Transaction Database $\cal D$}
\label{tab:tabEx}
\end{minipage}
\hfill
\begin{minipage}[t]{.53\linewidth}
   \begin{tabular}{c|l|c|c}
\hline
Name &  Asso. Rules & Support & Confidence\\
\hline
~~$r_1$~~ & $\{A\} \rightarrow \{B\}$ & 3/6 & 1\\
$r_2$ & $\{A\} \rightarrow \{B,C,D\}$ & 3/6 & 1\\
$r_3$ & $\{C\} \rightarrow \{D\}$ & 5/6 & 5/6\\
$r_4$ & $\{C,D\} \rightarrow \{E,F,G\}$ & 2/6 & 2/5\\
\hline
\end{tabular}
\caption{Some association rules}
\label{tab:tabExRules}\end{minipage}
}
\end{table}
\end{example}
 An {\it association rule}  is a pattern of the form $X\rightarrow Y$ where $X$ (called the \textit{antecedent}) and $Y$ (called the \textit{consequent}) are two disjoint itemsets. The interestingness predicate is defined using the notions of support and confidence. 
The {\it support of an association rule} $X\rightarrow Y$ in a transaction database $\cal D$, defined  as 
$Support(X\rightarrow Y) = \frac{Support(X\cup Y)}{|{\cal D}|}$, determines how often a rule is applicable to a given dataset, i.e., the occurrence frequency of the rule. 
The {\it confidence} of $X\rightarrow Y$ in $\cal D$, defined as ${\cal C}over(X\rightarrow Y) = \frac{Support(X\cup Y)}{Support(X)}$, provides an estimate of the conditional probability of $Y$ given $X$. 

A \emph{valid association rule} is an association rule with support and confidence greater than or equal to the minimum support threshold (minsupp) and minimum confidence threshold (minconf), respectively.  

\begin{definition} [Mining Association Rules Problem]
The problem of mining association rules consists in computing 
${\it MAR}({\cal D}, minsupp, minconf)= \{X\rightarrow Y\mid X,Y\subseteq {\Omega}, Support(X\rightarrow Y)\geq minsupp, {\cal C}over(X\rightarrow Y)\geq minconf\}$
\end{definition}




\subsection{SAT-based Non-Redundant Association Rules Mining}
 \label{sec:encodAssRul}
To mine association rules, Boudane et al. \cite{BoudaneJSS16}  proposed a SAT-based approach. Boolean variables are introduced to represent the antecedent and the consequence of an association rule $X \rightarrow Y$.  Support and confidence constraints are expressed as 0/1 linear inequalities over the  variables associated to transactions.

Let $\Omega$ be a set of $n$ items,  ${\cal D}=\{(1,I_1), \ldots{}, (m, I_m)\}$ a transaction database, where $\forall i\in\{1,m\}, I_i\subseteq \Omega$,  $minsupp$  (resp. $minconf$) a minimum support (resp. confidence) threshold. 
Each item is associated with two Boolean variables  $x_a$ and $y_a$. $x_a$ (resp. $y_a$)  is true if and only if $a\in X$ (resp. $a\in Y$). 
Similarly to \cite{JabbourSS13topk}, to represent the cover of $X$ and $X\cup Y$,  each transaction identifier $i\in\{1, m\}$  is associated with two propositional variables $p_i$ and $q_i$.  $p_i$ (resp. $q_i$) are used to represent the cover of $X$ (resp. $X\cup Y$). 
More precisely, given a Boolean interpretation $\cal I$, the corresponding association rule,  denoted $r_{\cal I}$,  is $X=\{a \in \Omega\mid {\cal I}(x_a)= 1\}\rightarrow Y= \{b\in \Omega\mid {\cal I}(y_b)= 1\}$, 
the cover of $X$ is $\{i\in \{1, m\}\mid {\cal I}(p_i)=1 \}$, and the cover of $X\cup Y$
is $\{i\in\{1,m\} \mid {\cal I}(q_i)=1 \}$.
The SAT encoding of the association rules mining problem is defined by the  constraints (\ref{form:0}) to (\ref{form:4bis}).  
  \begin{table}[h!]
\begin{minipage}[t]{.43\linewidth}
 \begin {equation}
 \label{form:0}
(\bigvee_{a \in \Omega} x_a)  \wedge (\bigvee_{a \in \Omega} y_a) 
 \end {equation} 
 \begin {equation}
 \label{form:1}
\bigwedge_{a \in \Omega} (\neg x_a \vee \neg y_a) 
 \end {equation}
  \begin{equation}
 \label{form:2}
 \bigwedge_{i\in 1..m} \neg p_i \leftrightarrow \bigvee_{a \in \Omega\setminus I_i} x_a 
 \end{equation}  
\end{minipage}
\hfill
\begin{minipage}[t]{.57\linewidth}
\begin{equation}
 \label{form:3}
  \bigwedge_{i\in 1..m} \neg q_i \leftrightarrow \neg p_i\vee (\bigvee_{a \in \Omega\setminus I_i} y_a)
 \end{equation}
\begin{equation}
   \label{form:4}
\sum_{i \in 1..m} q_i \geq m\times minsupp  
 \end{equation}
 \begin{equation}
   \label{form:4bis}
 \frac{\sum_{i \in 1..m} q_i}{\sum_{i\in 1..m} p_i} \geq minconf  
 \end{equation}
 \end{minipage}
\end{table}

The two clauses of  the formula (\ref{form:0}) express that $X$ and $Y$ are not empty sets. Formula (\ref{form:1})  allows to express $X\cap Y=\emptyset$. The formula  (\ref{form:2}) is used to represent the cover of the itemset $X$ corresponding to the left part of the candidate association rule. We know that the transaction identifier $i$ does not belong to ${\cal C}over(X,{\cal D})$ if and only if there exists an item $a\in X$ such that $a\notin I_i$. This property is represented by constraint (\ref{form:2}) expressing 
that  $p_i$ is $false$ if and only if $X$ contains an item that does not belong to the transaction $i$. In the same way, the formula (\ref{form:3}) allows to capture the cover of $X\cup Y$. To specify that the support of the candidate rule has to be greater than or equal to the fixed threshold $minsupp$ (in percentage), and the confidence is greater than or equal to  $minconf$ we use respectively the constraints (\ref{form:4}) and (\ref{form:4bis}) expressed by 0/1 linear inequalities. 

To extend the mining task to the \textcolor{black}{closed association rules}, the following constraint is added to express that  $X \cup Y$ is a closed itemset~\cite{JabbourSS13topk}: 
\begin{equation}
\label{form:closedC}
\bigwedge_{a\in\Omega} ( (\bigwedge_{i\in 1..m}  q_i \rightarrow a\in I_i)\rightarrow x_a \vee y_a)
\end{equation}
This formula means that, for all item $a\in \Omega$, if we have \textcolor{black}{$Support(X\cup Y, {\cal D})= Support(X\cup Y\cup\{a\}, {\cal D})$},  
which is encoded with the formula $\bigwedge_{i\in \{1, m\}}  q_i \rightarrow a\in I_i$, then we get $a\in X\cup Y$, which is encoded with $x_a \vee y_a$.

Several contributions deal with the enumeration of a compact representation of association rules. Among such representations, one can cite the well-known Minimal Non-Redundant Association Rules  \cite{Bastid00,Kryszkiewicz98MMR} defined as follows:
 
%

\begin{definition}[Minimal Non-Redundant Rule]
\label{def:mnr}
An association rule $X\rightarrow Y$ is a minimal non-redundant rule  iff there is no association rule $X'\rightarrow Y'$ different from $X\rightarrow Y$ s.t. ($1$) $Support(X\rightarrow Y)=Support(X'\rightarrow Y')$, ($2$) $Conf(X\rightarrow Y)=Conf(X'\rightarrow Y')$ and ($3$) $X'\textcolor{black}{\subseteq} X$ and $Y\textcolor{black}{\subseteq} Y'$.
\end{definition}

\begin{example}
	Consider the  rules given in Table \ref{tab:tabExRules}. In this set of rules,   $\{a\} \rightarrow \{b,c,d\}$  is a minimal non-redundant rule while $\{a\} \rightarrow \{b\}$ is not.
\end{example}

Minimal non-redundant association rules are the closed rules in which the antecedents are minimal w.r.t. set inclusion.  The authors of \cite{Bastid00} provided a characterization of  the antecedents, called \textit{minimal generators}.

\begin{definition}[Minimal Generator] \label{def:generator}Given a closed itemset $X$. 
An itemset $X' \subseteq X$ is a minimal generator of $X$ iff $Support(X') = Support(X)$ and there is no $X'' \subseteq  X$ s.t.  $X'' \subset X'$ and  $Support(X'') = Support(X)$.
\end{definition}

 In  \cite{BoudaneJSS17}, the authors proposed to extend the SAT-based encoding to enumerate the minimal non-redundant association rules. To this end, the SAT-based encoding of association rules mining is enhanced with a Boolean constraint expressing that each antecedent is a minimal generator. This constraint expressing that  ${Supp}(X\rightarrow Y)= {Supp}(X \setminus  \{a\}\rightarrow Y) $ is defined as follows:

\begin{equation}
\label{form:generatorX}
(\bigwedge_{a \in \Omega} x_a \rightarrow \bigvee_{(i\in\{1\dots m\},~a \not\in I_i)}  ( \bigwedge_{b\notin I_i\cup \{a\}} \neg x_b))  \vee (\sum_{b\in \Omega} x_b = 1)     
\end{equation}

Using additional variables, this constraint is rewritten as:


\begin{equation}
\label{form:generatorBEST1}
\begin{split}
\displaystyle \bigwedge_{a \in \Omega} (x_a \wedge \neg z \rightarrow \bigvee_{(a \not\in I_i)}  z_i ) ~~\wedge~~ \bigwedge_{i\in 1..m} (z_i  \rightarrow \sum_{b\notin I_i} x_b \le 1)  ~~\wedge~~   z \rightarrow (\sum_{b\in \Omega} x_b = 1)
\end{split}
\end{equation}

%


As we can observe, the previous constraint (\ref{form:generatorBEST1}) involves $m$ conditional AtMostOne constraints. We note $\Phi_{mnr}$, the conjunction of the formulas from (\ref{form:0}) to (\ref{form:closedC}) and (\ref{form:generatorBEST1}), encoding the problem of minimal non redundant association rules.
This encoding is used in our experimental evaluation to show the relevance of our proposed encoding. 

\section{Experiments} 

In this section, we consider the encoding of  minimal non-redundant association rules as described by the boolean formula $\Phi_{mnr}$ (see Subsection \ref{sec:encodAssRul}).
To enumerate the set of models of the resulting CNF formula, we follow the approach of \cite{BoudaneJSS16}. The proposed model enumeration algorithm  is based on a backtrack search DPLL-like procedure. In our experiments, the variables ordering heuristic, focus in priority on the variables of respectively $X$ and $Y$ to select the one to assign next. The main power of this approach consists in using watched literals structure to perform efficiently the unit propagation process.   Let us also note that the constraints (\ref{form:4}) and (\ref{form:4bis}) expressing respectively the frequency and the confidence are managed dynamically without translation into CNF form.
Indeed, these last constraints are handled and propagated on the fly as usually done in constraint programming.
Each model of the propositional formula encoding the association rules mining task corresponds to an association rule obtained by considering the truth values of the propositional variables encoding the antecedent ($X$) and the consequent ($Y$) of this rule. 

For illustration purposes, in our experiments, $\Phi_{mnr}^{gac}$ (resp. $\Phi_{mnr}^{n\_gac}$) denotes the formula $\Phi_{mnr}$, where the conditional AtMostOne constraints involved in the formula (\ref{form:generatorBEST1}) are expressed using the sequential counter encoding ((Section \ref{sec:atMostOne}) that maintains (resp. does not maintain) GAC property $gac$ (resp. $n\_gac$) expressed by the formula  (\ref{form:ImpAtMostOneUP}) (resp. \ref{form:AtMostSY}).  
In our experiments, for each data, the  support was varied from 5\% to 100\% with an interval of size 5\%. The confidence is varied in the same way. Then, for each data, a set of 400 configurations is generated.
All the experiments were done on Intel Xeon quad-core machines with 32GB of RAM running at 2.66 Ghz. For each instance, we fix the timeout to 15 minutes of CPU time.


 Table \ref{tab:tabResultsMin} describes our comparative results.
 We report in column 1 the name of the dataset and its characteristics in parenthesis: number of items (\#items), number of transactions (\#trans) and density. For each encoding, we report the number of solved configurations ($\# S$), and the average solving time ($avg. time$ in seconds). For each unsolved configuration, the  time  is  set to 900 seconds  (time out). In the last row of Table \ref{tab:tabResultsMin}, we provide the total number of solved configurations and the global average CPU time in seconds. 
 
\begin{table}[!t]
\centering
{\scriptsize
\begin{tabular}{|l||r|r|r|r||r|r|r|}
\hline
 & \multicolumn{2}{c|}{~~$\Phi_{nmr}^{n\_gac}$~~} & \multicolumn{2}{c|}{~~$\Phi_{nmr}^{gac}$~~}  \\
\hline\hline
&  	& avg.  &  	& avg.  \\
data  (\#items, \#trans, density) & ~~~\#S~~~	& ~~~time(s)~~~ &  ~~~\#S~~~	& ~~~time(s)~~~  \\ \hline
Audiology	(148, 216, 45\%) &	21 & 855.11 & {\bf 22} & {\bf 854.87}		\\ 
Zoo-1   (36, 101, 44\%)	&  141 & 582.79 	& {\bf 400}	&	{\bf 0.25} 	 \\ 
Tic-tac-toe (27, 958, 33\%)		&	395 & 12.7 & {\bf 400}	&	{\bf 0.16}  \\ 
Anneal	 (93, 812, 45\%) &	20 & 855.00  &  {\bf 252}	&	{\bf 396.65} 		 \\ 
Australian-credit (125, 653, 41\%) & 60 & 765.02 & 	{\bf 288} & {\bf 301.96 }	 \\ 
German-credit (112, 1000, 34\%)	&	82 & 715.54 & {\bf 331 }& {\bf 203.508 }	 \\ 
Heart-cleveland (95, 296, 47\%)	&	100 & 675.02 & {\bf 312} & {\bf 233.93}	 \\ 
Hepatitis (68, 137, 50\%)	&	102 & 670.51 & {\bf 345} & {\bf 165.28}	 \\ 
Hypothyroid (88, 3247, 49\%)	&	20 & 855.01 & {\bf 128} & {\bf 643.01}		 \\ 
kr-vs-kp	 (73, 3196, 49\%) & 21 & 852.76 &	{\bf 173 }& {\bf 546.40}		 \\ 
Lymph	(68, 148, 40\%)	&	 21 & 852.75 & {\bf 400} & {\bf 16.57} 	 \\ 
Mushroom	 (119, 8124, 18\%)&	20 & 855.08 & {\bf 392} & {\bf 68.71}	 \\ 
Primary-tumor (31, 336, 48\%)	&	144 & 577.05 & {\bf 400} & {\bf 3.87} 		\\ 
Soybean (50, 650, 32\%)	& 63 & 758.26 &	{\bf 400} & {\bf 0.72 }		 \\ 
Vote	  (48, 435, 33\%) &	243 	&	353.44 & {\bf 400} & {\bf 25.34} 		\\ 
Splice-1	  (287, 3190, 21\%) &	363 	&	{\bf 90.68} & {\bf 380} & 168.83	 \\ \hline
\hline
Total	& 1816	&	645.42 &  {\bf 5023}	&	 {\bf 226.87} 	\\ \hline

\hline

\end{tabular}
}
\vspace{0.2cm}
\caption{{Comparative results: $\Phi_{nmr}^{n\_gac}$  vs $\Phi_{nmr}^{gac}$}}
\label{tab:tabResultsMin}
\end{table}


 As we can observe,   the encoding $\Phi_{nmr}^{gac}$ clearly outperforms $\Phi_{nmr}^{n\_gac}$. Indeed, with the GAC encoding, we solve $3207$ more configurations than without maintaining GAC. The encoding $\Phi_{nmr}^{gac}$ is the best on all the data in terms of the number of solved configurations and average CPU time, except for {\it splice-1} data where $\Phi_{nmr}^{n\_gac}$ is better in term of CPU time, but not in the number of solved configurations.  Even if we considered a SAT-based data mining application  involving only conditional AtMostOne constraints, the performance improvements demonstrate the interest of our proposed encodings. 
 
%

\section{Conclusion and Future Works}
In this paper, we study the problem of encoding the conditional cardinality constraint of the form $y \rightarrow \sum_{i=1}^{n} x_i \le k$. 
In this context, we proved that most of the well known GAC encodings of the cardinality constraint need to be adapted in order to maintain such important property.
Our contributions suggested that the cardinality constraints should not be considered as an isolated single constraint, but as a part of more general formula.
More generally, encoding a complex constraint into CNF might consider its interaction with the whole formula where such constraint is involved.
As an application case, we considered the SAT-based encoding of the minimal non-redundant association rules mining problem, involving several conditional AtMostOne constraints.
Experimental results showed that a better encoding of  $y \rightarrow \sum_{i=1}^{n} x_i \le 1$ allows significant performances improvements. 
As a future work, we plan to extend  our framework to tackle other complex conditional constraints, including conditional 0/1 linear  inequalities.

\bibliographystyle{abbrv}
\bibliography{satBib} 
%
%

\end{document}